\newtheorem{theorem}{Theorem}[section]
\newenvironment{proof}[1][Proof]{\begin{trivlist}
\item[\hskip \labelsep {\bfseries #1}]}{\end{trivlist}}
\newcommand{\qed}{\nobreak \ifvmode \relax \else
      \ifdim\lastskip<1.5em \hskip-\lastskip
      \hskip1.5em plus0em minus0.5em \fi \nobreak
      \vrule height0.75em width0.5em depth0.25em\fi}
 \newcommand{\reals}{\rm I\!R}
\icmltitlerunning{Planning by Prioritized Sweeping with Small Backups}
\begin{document} 

\twocolumn[
\icmltitle{Planning by Prioritized Sweeping with Small Backups}

\icmlauthor{Harm van Seijen}{harm.vanseijen@ualberta.ca}
\icmlauthor{Richard S. Sutton}{sutton@cs.ualberta.ca}
\icmladdress{Department of Computing Science, University of Alberta, Edmonton, Alberta, T6G 2E8, Canada}

\icmlkeywords{reinforcement learning}

\vskip 0.3in
]

\begin{abstract} 
Efficient planning plays a crucial role in model-based reinforcement learning. Traditionally, the main planning operation is a full backup based on the current estimates of the successor states. Consequently, its computation time is proportional to the number of successor states. In this paper, we introduce a new planning backup that  uses only the current value of a single successor state and has a computation time independent of the number of successor states. This new backup, which we call a \emph{small backup}, opens the door to a new class of model-based reinforcement learning methods that exhibit much finer control over their planning process than traditional methods. We empirically demonstrate that this increased flexibility allows for more efficient planning by showing that an implementation of prioritized sweeping based on small backups achieves a substantial performance improvement over classical implementations. 

\end{abstract} 

\section{Introduction}

In \emph{reinforcement learning}
(RL) \cite{kaelbling:jair96,sutton:book98}, an agent seeks an optimal
control policy for a sequential decision problem in an initially unknown environment.
The environment provides feedback on the agent's behavior in the form of a reward signal.
The agent's goal is to maximize the expected \emph{return}, which is the discounted sum of rewards over future timesteps.
An important performance measure in RL is the \emph{sample efficiency}, which refers to the number of environment interactions that is required to obtain a good policy.

Many solution strategies improve the policy by iteratively improving a \emph{state-value} or \emph{action-value function}, which provide estimates of the expected return under a given policy for (environment) states or state-action pairs, respectively. Different approaches for updating these value functions exist. In terms of sample efficiency, one of the most effective approaches is to estimate the environment model using observed samples and to compute, at each time step, the \mbox{(action-)}value function that is optimal with respect to the model estimate using planning techniques. A popular planning technique used for this is \emph{value iteration} (VI) \cite{sutton:ml1988, watkins89}, which performs sweeps of backups through the state or state-action space, until the (action-)value function has converged.
 
A drawback of using  VI is that it is computationally very expensive, making it infeasible for many practical applications. Fortunately, efficient approximations can be obtained by limiting the number of backups that is performed per timestep. A very effective approximation strategy is \emph{prioritized sweeping} \cite{moore:mlj93,peng:ab1993}, which prioritizes backups that are expected to cause large value changes. This paper introduces  a new backup that enables a dramatic improvement in the efficiency of prioritized sweeping.

The main idea behind this new backup is as following. Consider that we are interested in some estimate $A$ that is constructed from a sum of other estimates $X_i$. The estimate $A$ can be computed using a \emph{full backup}:
\begin{displaymath}
A \leftarrow \sum_i X_i\thinspace.
\end{displaymath}
If the estimates $X_i$ are updated, $A$ can be recomputed by redoing the above backup. Alternatively, if we know that only $X_j$ received a significant value change, we might want to update $A$ for only $X_j$. Let us indicate the old value of $X_j$, used to construct the current value of $A$, as $x_j$. $A$ can then be updated by subtracting this old value and adding the new value:
\begin{displaymath}
A \leftarrow   A - x_j +  X_j\thinspace.
\end{displaymath}

This kind of backup, which we call a \emph{small backup}, is computationally cheaper than the full backup. The trade-off is that, in general, more memory is required for storing the estimates $x_i$ associated with $A$. In planning, where the $X$ estimates correspond to state-value estimates and $A$ corresponds to a state or state-action estimate, this is not a serious restriction, because a full model is stored already. The additional memory required has the same order of complexity as the memory required for storage of the model.

The core advantage of small backups over full backups is that they enable finer control over the planning process. This allows for more effective update strategies, resulting in improved trade-offs between computation time and quality of approximation of the VI solution (and hence sample efficiency). We demonstrate this empirically by showing that a prioritized sweeping implementation based on small backups yields a substantial performance improvement over the two classical implementations \citep{moore:mlj93, peng:ab1993}.

In addition, we demonstrate the relevance of small backups in domains with severe constraints on computation time, by showing that a method that performs one small backup per time step has an equal computation time complexity as TD(0), the classical method that performs one \emph{sample backup} per timestep. Since sample backups introduce sampling variance, they require a step-size parameter to be tuned for optimal performance. Small backups, on the other hand, do not introduce sampling variance, allowing for a parameter-free implementation. We empirically demonstrate that the performance of a method that performs one small backup per time step is similar to the optimal performance of TD(0), achieved by carefully tuning the step-size parameter.

\section{Reinforcement Learning Framework}

RL problems are often formalized as \emph{Markov
  decision processes} (MDPs), which can be described as tuples
$\langle \mathcal{S}, \mathcal{A}, \mathcal{P}, \mathcal{R}, \gamma \rangle$
consisting of $\mathcal{S}$, the set of all states; $\mathcal{A}$, the
set of all actions; $\mathcal{P}_{sa}^{s'} = Pr(s^\prime | s,a)$,
the transition probability from state $s \in \mathcal{S}$ to state $s^\prime$ when action $ a \in \mathcal{A}$ is taken; $\mathcal{R}_{sa} = E\{r|s,a\}$, the reward function giving the expected reward $r$ when action $a$ is taken in state $s$; and $\gamma$, the discount factor controlling the weight of future rewards versus that of the immediate reward.

Actions are selected at discrete timesteps $t = 0,1,2,...$ according to a \emph{policy} $\pi: \mathcal{S} \times \mathcal{A} \rightarrow [0,1]$, which defines for each action the selection probability conditioned on the state. In general, the goal of RL is to improve the policy in order to increase the \emph{return} $G$, which is the discounted cumulative reward
\begin{displaymath}
G_t = r_{t+1} + \gamma\,r_{t+2} + \gamma^2\,r_{t+3}+... = \sum_{k=1}^\infty\,\gamma^{k-1}\, r_{t+k}\thinspace,
\end{displaymath}
where $r_{t+1}$ is the reward received after taking action $a_t$ in state $s_t$ at timestep $t$. 

The \emph{prediction task} consists of determining the \emph{value function} $V^{\pi}(s)$, which gives the expected return when policy $\pi$ is followed, starting from state $s$.  $V^{\pi}(s)$ can be found by making use of the \emph{Bellman equations} for state values, which state the following:
\begin{equation}
V^{\pi}(s) = \mathcal{R}_s + \gamma\sum_{s'}\mathcal{P}_{s}^{s'} V^{\pi}(s')\thinspace,\label{eq:bellman_state}
\end{equation}
where $\mathcal{R}_s \! = \!\!  \sum_{a} \pi(s,a) \mathcal{R}_{sa}$ and $\mathcal{P}_{s}^{s'}\! = \!\!\sum_{a} \pi(s,a) \mathcal{P}_{sa}^{s'}$.

\emph{Model-based methods} use samples to update estimates of the transition probabilities, $\hat \mathcal{P}_{s}^{s'}$, and reward function, $\hat \mathcal{R}_{s}$. With these estimates, they can iteratively improve an estimate $V$ of $V^{\pi}$, by performing \emph{full backups}, derived from Equation (\ref{eq:bellman_state}):
\begin{equation}
V(s) \leftarrow \hat\mathcal{R}_s + \gamma\sum_{s'}\hat\mathcal{P}_{s}^{s'} V(s')\thinspace.
\label{eq:full value backup}
\end{equation}

In the \emph{control task}, methods often aim to find the optimal policy $\pi^*$, which maximizes the expected return. This policy is the greedy policy with respect to the optimal \emph{action-value function} $Q^*(s,a)$, which gives the expected return when taking action $a$ in state $s$, and following $\pi^*$ thereafter.  
This function is the solution to the \emph{Bellman optimality equation}  for action-values:
\begin{equation}
Q^*(s,a) = \mathcal{R}_{sa} + \gamma\sum_{s'}\mathcal{P}_{sa}^{s'} \max_{a'} Q^*(s',a')\thinspace. \label{eq:belllman_action}
\end{equation}
The optimal value function is related to the optimal action-value function through: $V^*(s) = \max_a Q^*(s,a)$.

Model-based methods  can iteratively improve estimates $Q$ of $Q^*$ by performing full backups derived from Equation (\ref{eq:belllman_action}):
\begin{equation}
Q(s,a) \leftarrow \hat \mathcal{R}_{sa} + \gamma\sum_{s'}\hat \mathcal{P}_{sa}^{s'} \max_{a'} Q(s',a')\thinspace,\label{eq:full action-value backup}
\end{equation}
where $\hat \mathcal{R}_{sa}$ and $\hat \mathcal{P}_{sa}^{s'} $ are estimates of $\mathcal{R}_{sa}$ and $\mathcal{P}_{sa}^{s'}$, respectively. 

\emph{Model-free methods} do not maintain an model estimate, but update a value function directly from samples. A classical example of a \emph{sample backup}, based on sample $(s,r,s')$ is the TD(0) backup:
\begin{equation}
V(s) \leftarrow V(s) + \alpha \,(r + \gamma V(s') - V(s))\thinspace,
\end{equation}
where $\alpha$ is the step-size parameter. 

\section{Small Backup}

This section introduces the small backup. We start with small state-value backups for the prediction task. Section \ref{sec:action-value backups} discusses small action-value backups for the control task.

\subsection{Value Backups}

In this section, we introduce a small backup version of the full  backup for prediction (backup \ref{eq:full value backup}). In the introduction, we showed that a small backup requires storage of the component values that make up the current value of a variable. In the case of a small value backup, 
the component values correspond to the values of successor states. We indicate these values by the function $U_s:  \mathcal{S} \times \mathcal{S} \rightarrow \reals$. So, $U_s(s')$ is the value estimate of $s'$ associated with $s$.

Using $U_s$, $V(s)$ can be updated with only the current value of a single successor state, $s'$, as demonstrated by the following theorem. The three backups shown in the theorem form together the small backup.
\begin{theorem}
If the current relation between $V(s)$ and $U_s$ is given by
\begin{equation}
V(s) = \hat \mathcal{R}_{s} + \gamma\sum_{s''}\hat \mathcal{P}_{s}^{s''} U_s(s'')\thinspace, \label{eq:V_Vs_relation}
\end{equation}
then, after performing the following backups:
\begin{eqnarray}
tmp &\leftarrow& V(s')  \label{eq:small_1}\\
V(s) &\leftarrow& V( s) +  \gamma \mathcal{P}_{s}^{s'} [V(s') - U_{s}(s')] \quad \label{eq:small_2}\\
U_{s}(s') &\leftarrow& tmp \label{eq:small_3}\thinspace,
\end{eqnarray}
relation (\ref{eq:V_Vs_relation}) still holds, but $U_{s}(s')$ is updated to $V(s')$.\label{th:value theorem}
\end{theorem}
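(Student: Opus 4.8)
The plan is to verify directly that, after the three backups, the algebraic identity~(\ref{eq:V_Vs_relation}) is restored with $U_s(s')$ replaced by its new value. This is a purely mechanical computation, so the proof will just be a careful bookkeeping argument tracking old versus new values of each quantity.

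First I would set up notation to avoid confusion: let $V_{\text{old}}(s)$, $U_{\text{old}}$, and $U_{\text{new}}$ denote the values before and after the backups, and note that backup~(\ref{eq:small_1}) merely stores $tmp = V(s')$ (the current successor value), backup~(\ref{eq:small_2}) changes only $V(s)$, and backup~(\ref{eq:small_3}) changes only the single entry $U_s(s')$, setting $U_{\text{new},s}(s') = V(s')$ while leaving $U_{\text{new},s}(s'') = U_{\text{old},s}(s'')$ for all $s'' \neq s'$. By hypothesis, $V_{\text{old}}(s) = \hat{\mathcal{R}}_s + \gamma\sum_{s''}\hat{\mathcal{P}}_s^{s''}\,U_{\text{old},s}(s'')$.

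Next I would compute the right-hand side of~(\ref{eq:V_Vs_relation}) with the updated $U_{\text{new}}$ and show it equals the updated $V(s)$. Splitting the sum over successors into the $s'$ term and the rest:
\begin{displaymath}
\hat{\mathcal{R}}_s + \gamma\sum_{s''}\hat{\mathcal{P}}_s^{s''}\,U_{\text{new},s}(s'') = \hat{\mathcal{R}}_s + \gamma\sum_{s'' \neq s'}\hat{\mathcal{P}}_s^{s''}\,U_{\text{old},s}(s'') + \gamma\,\hat{\mathcal{P}}_s^{s'}\,V(s')\thinspace.
\end{displaymath}
Adding and subtracting $\gamma\,\hat{\mathcal{P}}_s^{s'}\,U_{\text{old},s}(s')$ rebuilds the full original sum, giving $V_{\text{old}}(s) - \gamma\,\hat{\mathcal{P}}_s^{s'}\,U_{\text{old},s}(s') + \gamma\,\hat{\mathcal{P}}_s^{s'}\,V(s') = V_{\text{old}}(s) + \gamma\,\hat{\mathcal{P}}_s^{s'}\,[V(s') - U_{\text{old},s}(s')]$, which is exactly the new value of $V(s)$ produced by backup~(\ref{eq:small_2}). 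Hence~(\ref{eq:V_Vs_relation}) holds for the updated quantities, and by construction $U_s(s')$ now equals $V(s')$.

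There is no real obstacle here; the only thing to be careful about is the ordering of the three backups — in particular, that $tmp$ is read before $U_s(s')$ is overwritten and that $V(s')$ in backup~(\ref{eq:small_2}) is the same quantity stored in $tmp$ (i.e.\ $s' \neq s$, or if $s' = s$ one must check that the argument still goes through since $V(s)$ on the right of~(\ref{eq:small_2}) is the pre-update value). I would add a sentence noting that the temporary variable $tmp$ exists precisely to make the $s' = s$ case well-defined, so the stated order of operations is essential.
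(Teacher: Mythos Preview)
Your proposal is correct and follows essentially the same approach as the paper's own proof: a direct algebraic verification that backup~(\ref{eq:small_2}) subtracts the old $s'$-term and adds the new one, together with the observation that the temporary variable handles the $s'=s$ case. Your write-up is simply a more careful and explicit version of the same bookkeeping argument.
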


\begin{proof}
Backup (\ref{eq:small_2}) subtract the component in relation (\ref{eq:V_Vs_relation}) corresponding to $s'$ from $V(s)$ and adds a new component based on the current value estimate of $s'$:
\begin{displaymath}
V(s) \leftarrow  V(s) - \gamma\hat \mathcal{P}_{s}^{s'} U_s(s') + \gamma\hat \mathcal{P}_{s}^{s'} V(s')\thinspace.
\end{displaymath}
Hence, relation (\ref{eq:V_Vs_relation}) is maintained, while $U_s(s')$ is updated.
Note that $V(s')$ needs to be stored in a temporary variable, since backup (\ref{eq:small_2}) can alter the value of $V(s')$ if $s' = s$. 
\qed
\end{proof}

\subsection{Value Correction after Model Update}

Theorem \ref{th:value theorem} relies on relation (\ref{eq:V_Vs_relation}) to hold. If the model gets updated, this relation now longer holds. In this section, we discuss how to restore relation (\ref{eq:V_Vs_relation}) in a computation-efficient way for the commonly used model estimate:
\begin{eqnarray}\
\hat\mathcal{P}_{s}^{s'} &\leftarrow& N_{s}^{s'}/N_{s}  \label{model1}\\
\hat{\mathcal{R}}_{s} &\leftarrow& R^{sum}_{s} /N_{s}  \thinspace, \label{model2}
\end{eqnarray}
where $N_{s}$ counts the number of times state $s$ is visited, $N_{s}^{s'}$ counts the number of times $s'$ is observed as successor state of $s$, and $R^{sum}_{s}$ is the sum of observed rewards for $s$.

\begin{theorem}
If currently, the following relation holds:
\begin{displaymath}
V(s) = \hat \mathcal{R}_{s} + \gamma\sum_{s''}\hat \mathcal{P}_{s}^{s''} U_s(s'')\thinspace,
\end{displaymath}
and a sample $(s,r,s')$ is observed, then, after performing the backups:
\begin{eqnarray}
&&N_s \leftarrow N_s + 1;\quad N_{s}^{s'} \leftarrow N_{s}^{s'} + 1\quad\\
&&V(s) \leftarrow \Big[V(s)(N_{s}-1) + r + \gamma U_s(s')\Big]/N_{s}\thinspace. \quad\quad \label{eq:value correction}
\end{eqnarray}
the relation still holds, but with updated values for $\hat \mathcal{R}_{s}$ and $\hat \mathcal{P}_{s}^{s''}$.
\end{theorem}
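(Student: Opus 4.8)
The plan is to verify the invariant — relation~(\ref{eq:V_Vs_relation}) — by plugging the updated model straight into its right-hand side and showing it collapses, via the two counter increments, to exactly the assignment (\ref{eq:value correction}). I will read every symbol ($N_s$, $N_s^{s''}$, $R^{sum}_s$, $\hat{\mathcal R}_s$, $\hat{\mathcal P}_s^{s''}$, $V(s)$) as its value just \emph{before} the backups, and I will make explicit the one update the theorem leaves implicit: observing reward $r$ replaces the running sum $R^{sum}_s$ by $R^{sum}_s + r$. With definitions (\ref{model1})--(\ref{model2}) and the increments $N_s \leftarrow N_s+1$, $N_s^{s'} \leftarrow N_s^{s'}+1$, the new estimates are then $(R^{sum}_s+r)/(N_s+1)$ for the reward and $(N_s^{s''}+\delta_{s'',s'})/(N_s+1)$ for each transition probability, where $\delta_{s'',s'}$ equals $1$ exactly on the observed successor $s'$ and $0$ elsewhere.

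First I would substitute these two expressions into the right-hand side of the invariant evaluated with the new model and factor out $1/(N_s+1)$; the $\delta$ term contributes a single extra $\gamma U_s(s')$ on top of $\gamma\sum_{s''}N_s^{s''}U_s(s'')$. Next I would undo the old model definitions in reverse, writing $R^{sum}_s = N_s\hat{\mathcal R}_s$ and $N_s^{s''} = N_s\hat{\mathcal P}_s^{s''}$ (the \emph{old} estimates), so that the bracketed numerator becomes $N_s\big(\hat{\mathcal R}_s + \gamma\sum_{s''}\hat{\mathcal P}_s^{s''}U_s(s'')\big) + r + \gamma U_s(s')$. By hypothesis the invariant held for the old model, so the parenthesized factor is precisely the old $V(s)$. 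What remains is $\big[\,N_s\,V(s) + r + \gamma U_s(s')\,\big]/(N_s+1)$, and since the post-update count equals $N_s+1$, this is term-for-term the right-hand side of backup (\ref{eq:value correction}). Hence the invariant is restored for the updated model while $U_s$ is left untouched, as claimed.

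I do not expect a genuine obstacle: the computation is the familiar incremental-mean rearrangement. The only spots that need a careful word are (i) flagging the implicit $R^{sum}_s \leftarrow R^{sum}_s + r$ bookkeeping that accompanies the sample, and (ii) the degenerate cases — a successor $s'$ observed for the first time is harmless, since its old weight $\hat{\mathcal P}_s^{s'}$ is $0$ and drops out of the old invariant, and the first-ever visit to $s$ (where $N_s = 0$ makes the pre-backup estimates formally $0/0$) is handled by the standing convention that an unvisited state has $V(s)=0$ and $U_s(\cdot)=0$, under which $N_s V(s)$ vanishes and the post-update invariant reduces to $V(s) = r + \gamma U_s(s')$, matching (\ref{eq:value correction}) at $N_s+1 = 1$.
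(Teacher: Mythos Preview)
Your argument is correct and is exactly the computation the paper's sketch is pointing at: where the paper simply says the backup ``computes a weighted average of $V(s)$ and $r+\gamma U_s(s')$'' whose effect equals ``updating the model and then performing a full backup of $s$ based on $U_s$,'' you actually carry out that incremental-mean expansion line by line. Your treatment is strictly more detailed than the paper's own proof (including the implicit $R^{sum}_s$ update and the degenerate-case discussion, which the paper omits); the one minor overstatement is that for $N_s=0$ you don't need any convention on $V(s)$, since the factor $N_s$ already kills the old value.
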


\begin{proof}[Proof (sketch)]
Backup (\ref{eq:value correction}) updates $V(s)$ by computing a weighted average of $V(s)$  and  $r + \gamma U_s(s')$. The value change this causes is the same as the value change caused by updating the model and then performing a full backup of $s$ based on $U_s$. \qed
\end{proof}

Algorithm \ref{al:small backup evaluation} shows pseudo-code for a general class of prediction methods based on small backups. Surpisngly, while it is a planning method, $\hat \mathcal{R}_s$ is never explicitly computed, saving time and memory. Note that the computation per time step is fully independent of the number of successor states.  Members of this class need to specify the number of iterations (line 8) as well as a strategy for selecting state-successor pairs (line 9). 

\begin{algorithm}[thb]
\begin{algorithmic}[1]
\STATE initialize $V(s)$ arbitrarily for all $s$
\STATE initialize $U_{s}(s')  = V(s')$ for all $s,s'$
\STATE initialize $N_{s}, N_{s}^{s'}$ to 0 for all $s,s'$
\LOOP[over timesteps]
\STATE observe transition $(s,r,s')$
\STATE $N_{s} \leftarrow N_{s} + 1; \quad N_{s}^{s'} \leftarrow N_{s}^{s'} + 1$
\STATE $V(s) \leftarrow \Big[V(s)(N_{s}-1) + r + \gamma \,U_s(s')\Big]/N_{s}$
\LOOP[for a number of iterations]
\STATE select a pair $(\bar s, \bar s')$ with $N_{\bar s}^{ \bar s'} > 0$
\STATE $tmp \leftarrow V(\bar s')$
\STATE $V(\bar s) \leftarrow V(\bar s) +  \gamma N_{\bar s}^{\bar s'}/N_{\bar s}\cdot [V(\bar s') - U_{\bar s}(\bar s')]$
\STATE $U_{\bar s}(\bar s') \leftarrow tmp$
\ENDLOOP
\ENDLOOP
\caption{Prediction with Small Backups}
\label{al:small backup evaluation}
\end{algorithmic}
\end{algorithm}

\subsection{Action-value Backups}
\label{sec:action-value backups}

Before we can discuss small action-value backups, we have to discuss a more efficient implementation of the full action-value backup. Backup (\ref{eq:full action-value backup}) has a computation time complexity of  $\mathcal{O}(|\mathcal{S}||\mathcal{A}|)$. A more efficient implementation can be obtained by storing state-values, besides action-values, according to $V(s) = \max_a Q(s,a)$. 
Backup (\ref{eq:full action-value backup}) can then be implemented by:
\begin{eqnarray}
Q(s,a) &\leftarrow& \hat \mathcal{R}_{sa}+ \gamma \sum_{s'} V(s')  \label{eq:QV}\\
V(s)  &\leftarrow& \max_{a'} Q(s,a) \label{eq:VQ}\thinspace.
\end{eqnarray}
The combined computation time of these backups is $\mathcal{O}(|\mathcal{S}| + |\mathcal{A}|)$, a considerable reduction.

Backup (\ref{eq:QV}) is similar in form as the prediction backup. Hence, we can make a small backup version of it similar to the one in the prediction case. 
The theorems below are the control versions of the theorems for the prediction case. They can be proven in a similar way as the prediction theorems. 

\begin{theorem}
If the current relation between $Q(s,a)$ and $U_{sa}$ is given by
\begin{equation}
Q(s,a) \leftarrow \hat \mathcal{R}_{sa} + \gamma \mathcal{P}_{sa}^{s''}\sum_{s''} U_{sa}(s'')\thinspace,  \label{eq:relationQV}
\end{equation}
then, performing the following backups:
\begin{eqnarray*}
Q(s,a) &\leftarrow& Q(s,a) + \gamma \mathcal{P}_{sa}^{s'}[V(s') - U_{sa}(s')]\\
U_{sa}(s') &\leftarrow& V(s')\thinspace,
\end{eqnarray*}
maintains this relation while updating $U_{sa}(s')$  to $V(s')$. 
\end{theorem}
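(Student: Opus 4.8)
The plan is to mirror the proof of Theorem~\ref{th:value theorem} almost verbatim, since backup~(\ref{eq:QV}) has exactly the same algebraic form as backup~(\ref{eq:full value backup}): the role of $V(s)$ is played by $Q(s,a)$, the role of $\hat{\mathcal{P}}_s^{s'}$ is played by $\hat{\mathcal{P}}_{sa}^{s'}$, the role of $U_s(s')$ is played by $U_{sa}(s')$, and the successor values being summed are the state-values $V(s')=\max_{a'}Q(s',a')$ rather than the $V(s')$ of the prediction case. First I would write out relation~(\ref{eq:relationQV}) as $Q(s,a)=\hat{\mathcal{R}}_{sa}+\gamma\sum_{s''}\hat{\mathcal{P}}_{sa}^{s''}U_{sa}(s'')$ and isolate the single term of the sum indexed by $s''=s'$; this term equals $\gamma\hat{\mathcal{P}}_{sa}^{s'}U_{sa}(s')$.

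Next I would show that the first displayed backup,
\begin{displaymath}
Q(s,a) \leftarrow Q(s,a) + \gamma\hat{\mathcal{P}}_{sa}^{s'}[V(s') - U_{sa}(s')],
\end{displaymath}
is exactly ``subtract the old $s'$-term and add the new $s'$-term'': it rewrites as $Q(s,a)\leftarrow Q(s,a)-\gamma\hat{\mathcal{P}}_{sa}^{s'}U_{sa}(s')+\gamma\hat{\mathcal{P}}_{sa}^{s'}V(s')$. Substituting the pre-update expression for $Q(s,a)$ from relation~(\ref{eq:relationQV}), the $-\gamma\hat{\mathcal{P}}_{sa}^{s'}U_{sa}(s')$ cancels the old $s'$-term in the sum, leaving $\hat{\mathcal{R}}_{sa}+\gamma\sum_{s''\neq s'}\hat{\mathcal{P}}_{sa}^{s''}U_{sa}(s'')+\gamma\hat{\mathcal{P}}_{sa}^{s'}V(s')$. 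After the second backup $U_{sa}(s')\leftarrow V(s')$, this is precisely $\hat{\mathcal{R}}_{sa}+\gamma\sum_{s''}\hat{\mathcal{P}}_{sa}^{s''}U_{sa}(s'')$ evaluated with the new $U_{sa}$, so relation~(\ref{eq:relationQV}) holds again with $U_{sa}(s')$ updated to $V(s')$.

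One small subtlety worth flagging is the temporary-variable issue that appears in Theorem~\ref{th:value theorem}: there, backup~(\ref{eq:small_2}) could alter $V(s')$ when $s'=s$, forcing the use of \emph{tmp}. In the action-value case this aliasing cannot occur, because the backup modifies $Q(s,a)$ (and only indirectly $V(s)=\max_{a'}Q(s,a')$) while reading $V(s')$; since the backup changes no $Q(s',\cdot)$ entry, $V(s')$ is unaffected and no temporary variable is needed, which is why the theorem statement omits it. I do not anticipate a genuine obstacle here — the content is a direct transcription of the earlier argument — so the only thing to be careful about is bookkeeping: making sure the split of the sum into the $s'$-term and the rest is done before substituting, and being explicit that $V(s')$ is treated as a fixed constant throughout the two backups.
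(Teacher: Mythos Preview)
Your proposal is correct and takes exactly the approach the paper intends: the paper does not give an explicit proof of this theorem but simply states that the control theorems ``can be proven in a similar way as the prediction theorems,'' and your argument is precisely that transcription of the proof of Theorem~\ref{th:value theorem}. Your observation about why no temporary variable is needed here (the backup writes to $Q(s,a)$ while reading the separately stored $V(s')$, so no aliasing can occur even when $s'=s$) is a nice clarification that the paper leaves implicit.
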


\begin{theorem}
If relation (\ref{eq:relationQV}) holds and a sample $(s,a,r,s')$ is observed, then, after performing backups
\begin{eqnarray*}
N_{sa} &\leftarrow& N_{sa} + 1; \quad N_{sa}^{s'} \leftarrow N_{sa}^{s'} +1 \\
Q(s,a) &\leftarrow& \Big[Q(s,a)(N_{sa}-1) + r + \gamma U_{sa}(s')\Big]/N_{sa}\thinspace, 
\end{eqnarray*}
relation (\ref{eq:relationQV})  still holds, but with updated values for $\hat \mathcal{R}_{sa}$ and $\hat \mathcal{P}_{sa}^{s''}$.
\end{theorem}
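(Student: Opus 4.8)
The plan is to imitate the proof of the analogous prediction theorem (value correction after a model update), doing the count bookkeeping explicitly for action values. I would work with the standard model estimate $\hat\mathcal{R}_{sa} = R^{sum}_{sa}/N_{sa}$ and $\hat\mathcal{P}_{sa}^{s''} = N_{sa}^{s''}/N_{sa}$, the action-value analogue of (\ref{model1})--(\ref{model2}), and note at the outset that the two backups in the statement change neither the stored successor values $U_{sa}$ nor (after the first line) the counts, so the only quantities in motion are $N_{sa}$, $N_{sa}^{s'}$, the implicit $R^{sum}_{sa}$, and $Q(s,a)$ itself. In particular $V(s')=\max_{a'}Q(s',a')$ is irrelevant here, since $U_{sa}$ is held fixed.

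First I would record the effect of the sample $(s,a,r,s')$ on the sufficient statistics: $N_{sa}$ and $N_{sa}^{s'}$ each increase by one, every other $N_{sa}^{s''}$ is unchanged, and $R^{sum}_{sa}$ increases by $r$. Writing $n$ for the old value of $N_{sa}$, the updated model is $\hat\mathcal{R}_{sa}' = (R^{sum}_{sa}+r)/(n+1)$, $\hat\mathcal{P}_{sa}^{s'}{}' = (N_{sa}^{s'}+1)/(n+1)$, and $\hat\mathcal{P}_{sa}^{s''}{}' = N_{sa}^{s''}/(n+1)$ for $s''\neq s'$.

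Next I would write down the value of $Q(s,a)$ that makes relation (\ref{eq:relationQV}) hold against this updated model while $U_{sa}$ stays fixed, namely $\hat\mathcal{R}_{sa}' + \gamma\sum_{s''}\hat\mathcal{P}_{sa}^{s''}{}'\,U_{sa}(s'')$. Pulling the common factor $1/(n+1)$ out front turns this into $\big[\,R^{sum}_{sa} + r + \gamma\sum_{s''}N_{sa}^{s''}U_{sa}(s'') + \gamma U_{sa}(s')\,\big]/(n+1)$. The key step is to invoke the pre-update relation (\ref{eq:relationQV}) multiplied through by $n$, which reads $n\,Q(s,a) = R^{sum}_{sa} + \gamma\sum_{s''}N_{sa}^{s''}U_{sa}(s'')$; substituting this identity collapses the expression to $\big[\,n\,Q(s,a) + r + \gamma U_{sa}(s')\,\big]/(n+1)$. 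Since after the first line of the theorem $N_{sa}=n+1$, this equals $\big[\,Q(s,a)(N_{sa}-1) + r + \gamma U_{sa}(s')\,\big]/N_{sa}$, i.e. exactly the second backup in the statement, so relation (\ref{eq:relationQV}) is restored with the new model; as a by-product, $R^{sum}_{sa}$ and $\hat\mathcal{R}_{sa}$ never need to be stored.

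I would close with the interpretation used in the prediction sketch: the new $Q(s,a)$ is the weighted average of the old $Q(s,a)$ and $r+\gamma U_{sa}(s')$ with weights $n/(n+1)$ and $1/(n+1)$, which is precisely what a full backup (\ref{eq:QV}) performed after the model update would yield — the small-backup form merely avoids touching the remaining successor terms. I do not expect a genuine obstacle: the only point requiring care is the off-by-one in $N_{sa}$ (the backup is written for the already-incremented count), and, in contrast to Theorem~\ref{th:value theorem} and the action-value backup theorem above, no temporary variable is needed here because $Q(s,a)$ does not occur inside the terms being averaged.
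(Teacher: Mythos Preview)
Your argument is correct and is exactly the approach the paper intends: the paper gives no explicit proof for this control theorem, merely remarking that it ``can be proven in a similar way as the prediction theorems,'' whose own proof is only the weighted-average sketch you paraphrase at the end. Your explicit count bookkeeping and the substitution $n\,Q(s,a) = R^{sum}_{sa} + \gamma\sum_{s''}N_{sa}^{s''}U_{sa}(s'')$ are a faithful (and more detailed) fleshing-out of that sketch.
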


A small action-value backup is a finer-grained version of backup (\ref{eq:QV}): performing a small backup of $Q(s,a)$ for each successor state is equivalent (in computation time complexity and effect) as performing backup (\ref{eq:QV}) once. 
While in principle, backup (\ref{eq:VQ}) can be performed after each small backup, it is not very efficient to do so, since small backups make many small changes. More efficient planning can be obtained when backup (\ref{eq:VQ}) is performed only once in a while.

In Section \ref{sec:PS small backups}, we discuss an implementation of prioritized sweeping based on small action-value backups.

\subsection{Small Backups versus Sample Backups}

A small backup has in common with a sample backup that both update a state value based on the current value of only one of the successor states. In addition, they share the same computation time complexity and their effect is in general smaller than that of a full backup.

A disadvantage of  a sample backup, with respect to a small backup, is that it introduces sampling variance, caused by a stochastic environment. This requires the use of a step-size parameter to enable averaging over successor states (and rewards). A small backup does not introduce sampling variance, since it is implicitly based on an expectation over successor states.  Hence, it does not require tuning of a step-size parameter for optimal performance.

A second disadvantage of a sample backup is that it affects the perceived distribution over action outcomes, which places some restrictions on reusing samples. For example, a model-free technique like experience replay \cite{lin:ml1992}, which stores experience samples in order to replay them at a later time, can introduce bias, which reduces performance, if some samples are replayed more often than others.  For small backups this does not hold, since the process of learning the model is independent from the backups based on the model. This allows small backups to be combined with effective selection strategies like prioritized sweeping.

\section{Prioritized Sweeping with Small Backups}
\label{sec:PS small backups}

Prioritized sweeping (PS) makes the planning step of model-based RL more efficient by using a heuristic (a `priority') for selecting backups that favours backups that are expected to cause a large value change.  A priority queue is maintained that determines which values are next in line for receiving backups.

There are two main implementations: one by \citet{moore:mlj93} and one by \citet{peng:ab1993} \footnote{We refer to the version of `queue-Dyna' for stochastic domains, which is different from the version for deterministic domains.}. All PS methods have in common that they perform backups in what we call \emph{update cycles}. By adjusting the number of update cycles that is performed per time step, the computation time per time step can be controlled. Below, we discuss in detail what occurs in an update cycle for the two classical PS implementations.

\subsection{Classical Prioritized Sweeping Implementations}

In the Moore \& Atkeson implementation the elements in the queue are states and the backups are full value backups. In control, a full value backup is different from backup (\ref{eq:full value backup}). Instead, it is equivalent (in effect and computation time) to performing backup (\ref{eq:QV}) for each action, followed by backup (\ref{eq:VQ}). Hence, the associated computation time has complexity $\mathcal{O}(|\mathcal{S}||\mathcal{A}|  + |\mathcal{A}|)$.  

An update cycle consists of the following steps. First, the top state is removed from the queue, and receives a full value backup. Let $s$ bet the top state and  $\Delta V_s$ the value change caused by the backup. Then, for all predecessor state-action pairs $(\bar s,\bar a)$ a priority $p$ is computed, using:
\begin{equation}
p  \leftarrow  \mathcal{P}_{\bar s \bar a}^{s} \cdot |\Delta V_s | \thinspace.
\end{equation}
If $\bar s$ is not yet on the queue, then it is added with priority $p$. If $\bar s$ is on the queue already, but its current priority is smaller than $p$, then the priority of $\bar s$ is upgraded to $p$.

The Peng \& Williams implementation differs from the Moore \& Atkeson implementation in that the backup is not a full value backup. Instead, it is a backup with the same effect as a small action-value backup, but with a computational complexity  of $\mathcal{O}(|\mathcal{S}| + |\mathcal{A}|)$. So, it is a cheaper backup than a full backup, but its value change is (much) smaller. The backup requires a state-action-successor triple. Hence, these triples are the elements on the queue. Predecessors are added to the queue with a priorities that estimate the action-value change.

\subsection{Small Backup implementation}

A natural small backup implementation might appear to be an implementation similar to that of Peng \& Williams, but with the main backup implemented more efficiently. The low computational cost of a small backup, however, allows for a much more powerful implementation. The pseudo-code of this implementation is shown in Algorithm \ref{al:PS small backups}. Below, we discuss some key characteristics of the algorithm.

The computation time of a small backup is so low, that it is comparable to the priority computation in the classical PS implementations. Therefore, instead of computing a priority for each predecessor and performing a backup for the element with the highest priority in the next update cycle, we can perform a small backup for all predecessors. This raises the question of what to put in the priority queue and what type of backup to perform for the top element. The natural answer is to put states in the priority queue and to perform backup (\ref{eq:VQ}) for the top state.

The priority associated with a state is based on the change in action-value that has occurred due to small backups, since the last value backup. This priority assures that states with a large discrepancy between the state value and action-values, receive a value backup first. 

One surprising aspect of the algorithm is that it does not use the function $U_{sa}$, which forms an essential part of small action-value backups. The reason is that due to the specific backup strategy used by the algorithm, $U_{sa}(s')$ is equal to $V(s')$ for all state-action pairs $(s,a)$ and all successor states $s'$. Hence, instead of using $U_{sa}$,  $V$ can be used, saving memory and simplifying the code.

Table \ref{table:computation time} shows the computation time complexity of an update cycle for the different PS implementations. The small backup implementation is the cheapest one among the three. 

\begin{table}[!tbh]
\centering
\begin{tabular}{|r|r|r|} \hline
& top-element & other  \\
& backups & \\  \hline
Moore \& Atkeson & $\mathcal{O}(|\mathcal{S}||\mathcal{A}|  + |\mathcal{A}|)$   & $\mathcal{O}(P_{re})$\\
Peng \& Williams & $\mathcal{O}(|\mathcal{S}| + |\mathcal{A}|)$    & $\mathcal{O}(P_{re})$ \\
small backups & $\mathcal{O}(|\mathcal{A}|)$ & $\mathcal{O}(P_{re})$ \\ \hline
\end{tabular}
\caption{Computation time associated with one update cycle for the different PS implementations. $P_{re}$ indicates the number of predecessors, state-action pairs that transition to the state whose value has just been updated. }
\label{table:computation time}
\end{table}

\begin{algorithm}[thb]
\begin{algorithmic}[1]
\STATE initialize $V(s)$ arbitrarily for all $s$
\STATE initialize $Q(s,a) = Q_{prev}(s,a) = V(s)$ for all $s,a$
\STATE initialize $N_{sa}, N_{sa}^{s'}$ to 0 for all $s,a,s'$
\LOOP[over episodes]
\STATE initialize $s$
\REPEAT[for each step in the episode]
\STATE select action $a$, based on $Q(s, \cdot)$
\STATE take action $a$, observe $r$ and $s'$
\STATE $N_{sa} \leftarrow N_{sa} + 1; \quad N_{sa}^{s'} \leftarrow N_{sa}^{s'} + 1$
\STATE $Q(s,a) \leftarrow \bigl[Q(s,a)(N_{sa}-1) + r  + \gamma V(s') \bigr]/N_{sa}$
\STATE $p \leftarrow |Q(s,a) -Q_{prev}(s,a)|$
\STATE if $s$ not on queue or $p > $ current priority $s$, then promote $\bar s$ to $p$ 
\FOR{ a number of update cycles}
\STATE remove top state $\bar s' $ from queue
\STATE for all $b$:  $Q_{prev}(\bar s',b) \leftarrow Q(\bar s',b)$
\STATE $tmp \leftarrow V(\bar s')$
\STATE $V(\bar s') \leftarrow \max_b Q(\bar s',b)$
\STATE $\Delta V \leftarrow  V(\bar s') - tmp$
\FORALL{$(\bar s,\bar a)$ pairs with $N_{\bar s \bar a}^{\bar s'} > 0$}
\STATE $Q(\bar s,\bar a)  \leftarrow Q(\bar s,\bar a) + \gamma  N_{\bar s\bar a}^{\bar s'}/N_{\bar s\bar a} \cdot \Delta V$
\STATE $p \leftarrow |Q(\bar s,\bar a) -Q_{prev}(\bar s,\bar a)|$
\STATE if $\bar s$ not on queue or $p > $ current priority $\bar s$, then promote $\bar s$ to $p$ 
\ENDFOR
\ENDFOR
\STATE $s \leftarrow s'$
\UNTIL{$s$ is terminal}
\ENDLOOP
\caption{Prioritized Sweeping with Small Backups}
\label{al:PS small backups}
\end{algorithmic}
\end{algorithm}

\section{Experimental Results}

In this section, we evaluate the performance of a minimal version of Algorithm \ref{al:small backup evaluation}, as well as the performance of Algorithm \ref{al:PS small backups}.

\subsection{Small backup versus Sample backup}
\label{section:small backup experiments}

We compare the performance of TD(0), which performs one sample backup per time step, with a version of Algorithm 1 that performs one small backup per time step. Specifically, its number of iterations (line 8) is 1, and the selected state-successor pair (line 9) is the pair corresponding to the most recent transition.

Their performance is compared on two evaluation tasks, both consisting of 10 states, laid out in a circle. State transitions only occur between neighbours.  The transition probabilities for both tasks are generated by a random process. Specifically, the transition probability to a neighbour state is generated by a random number between 0 and 1 and normalized such that the sum of the transition probabilities to the left and right neighbour is 1. The reward for counter-clockwise transitions is always +1. The reward for clockwise transitions is different for the two tasks. In the first task, a clockwise transition results in a reward of -1; in the second task, it results in a reward of +1. The discount factor $\gamma$ is 0.95 and the initial state values are 0.

For TD(0), we performed experiments with a constant step-size for values between 0 and 1 with steps of 0.02. In addition, we performed experiments with a decaying, state-dependent step-size, according to
\begin{equation}
\alpha(s) =   \frac{1}{d\cdot (N_s - 1) + 1}\quad,
\end{equation}
where $N_s$ is the number of times state $s$ has been visited, and $d$ specifies the \emph{decay rate}.  We used values of $d$ between 0 and 1 with steps of 0.02. Note that for $d=0$, $\alpha(s) = 1$, and for $d=1$, $\alpha(s) = 1 / N_s$.

Each time a transition is observed and the corresponding backup is performed, the root-mean squared (RMS) error over all states is determined. The average RMS error over the first 10.000 transitions, normalized with the initial error, determines the performance.  Figure \ref{fig:eval_results} shows this performance, averaged over 100 runs. The standard error is negligible: the maximum standard error in the first task was 0.0057 (after normalization) and in the second task 0.0007. Note that the performance for $d = 0$ is equal to the performance for $\alpha = 1$, as it should, by definition. The normalized performance for $\alpha = 0$ is 1, since no learning occurs in this case.

These experiments demonstrate three things. First, the optimal step-size can vary a lot between different tasks. Second, selecting a sub-optimal step-size can cause large performance drops. Third, a small-backup, which is parameter-free, has a performance similar to the performance of TD(0) with optimized step-size. Since the computational complexity is the same, the small backup is a very interesting alternative to the sample backup in domains with tight constraints on the computation time,  where previously only sample backups where viable. Keep in mind that a sample backup does require a model estimate, so if there are also tight constraints on the memory, a sample backup might still be the only option.

\begin{figure}[thb]
\begin{center}
\includegraphics[width=\columnwidth]{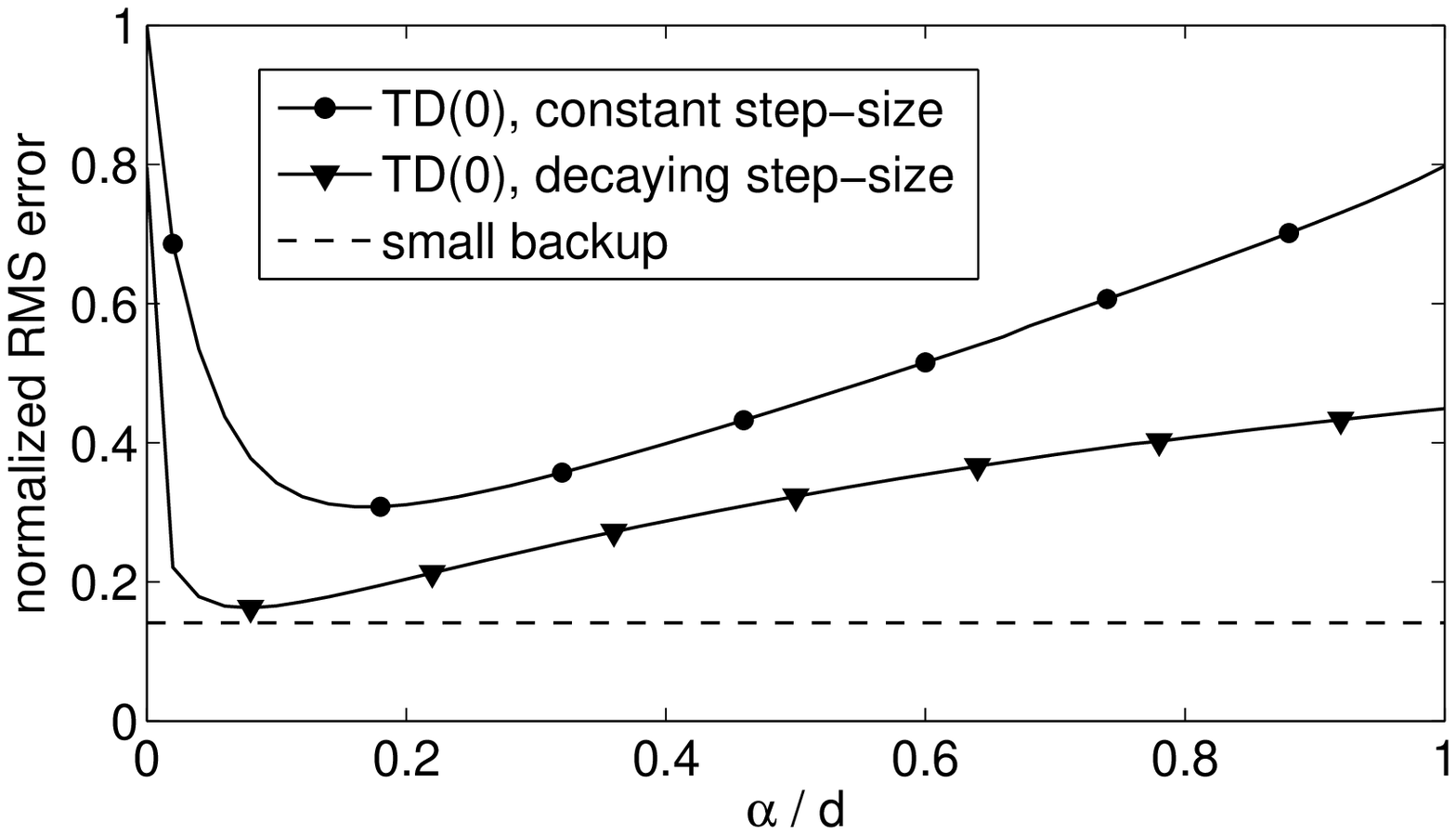}
\vspace{0cm}
\includegraphics[width=\columnwidth]{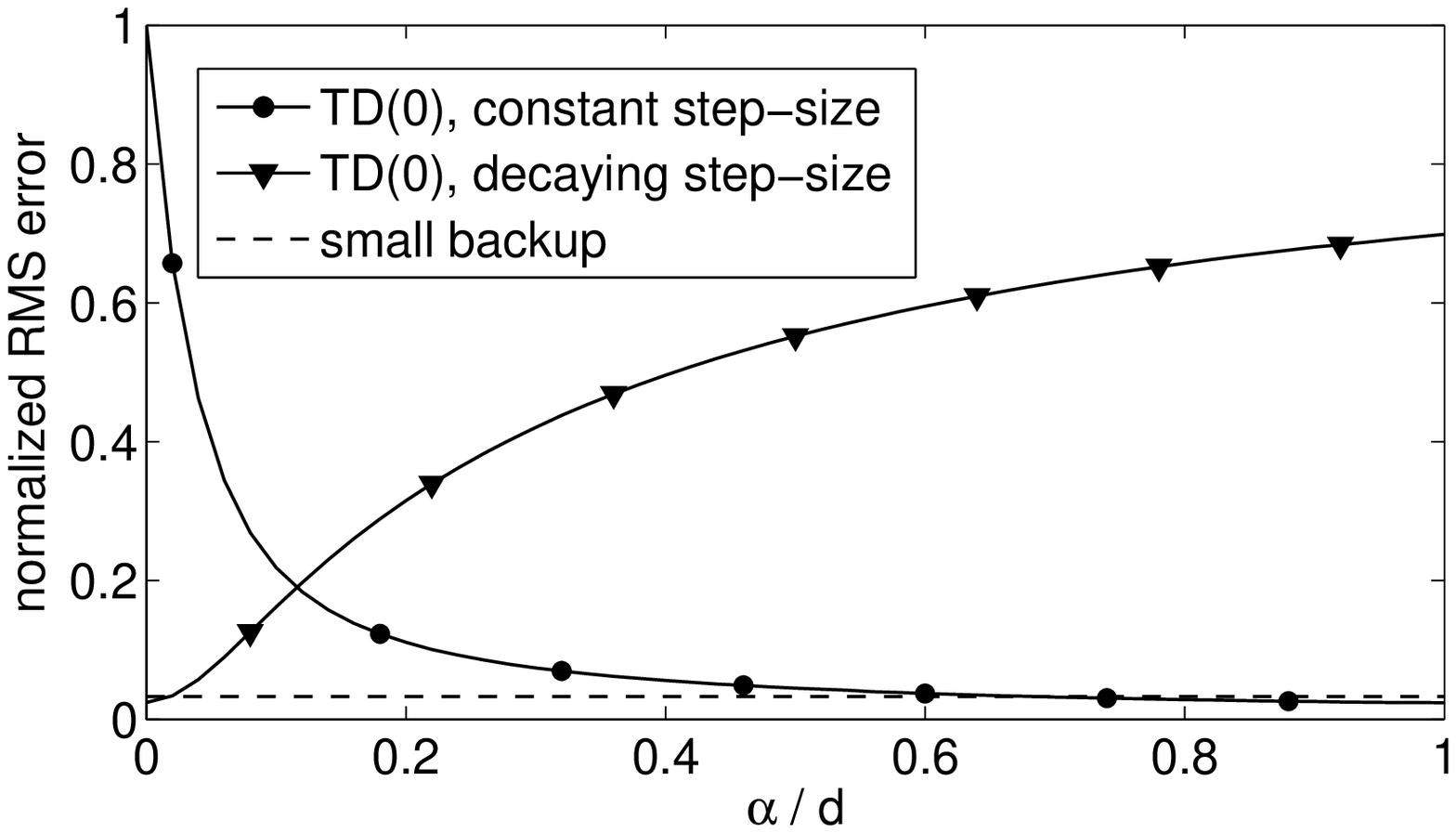}
\caption{Average RMS error over the first 10.000 observations, normalized by the initial error, for different values of the step-size parameter $\alpha$, in case of constant step-size, or different values of the decay parameter $d$, in case of decaying step-size. The top graph corresponds with the first evaluation task; the bottom graph with the second.}
\label{fig:eval_results}
\end{center}
\end{figure}

\subsection{Prioritized Sweeping}

We compare the performance of prioritized sweeping with small backups (Algorithm \ref{al:PS small backups}) with the two classical implementations of Moore\&Atkeson and Peng\&Williams on the maze task depicted in the top of Figure \ref{fig:LargeMaze2}. The reward received at each time step is -1 and the discount factor is 0.99. The agent can take four actions, corresponding to the four compass directions, which stochastically move the agent to a different square. The bottom  of Figure \ref{fig:LargeMaze2} shows the relative action outcomes of a `north' action. In free space, an action can result in 15 possible successor states, each with equal probability. When the agent is close to a wall, this number decreases.

To obtain an upper bound on the performance, we also compared against a method that performs  value iteration (until convergence) at each time step, using the most recent model estimate.

As exploration strategy, the agent select with 5\% probability a random action, instead of the greedy one. On top of that, we use the `optimism  in the face of uncertainty' principle, as also used by Moore \& Atkeson. This means that as long as a state-action pair has not been visited for at least M times, it's value is defined as some optimistically defined value (0 for our maze task), instead of the value based on the model estimate. We optimized $M$ for the value iteration method, resulting in $M=4$, and used this value for all methods.

We performed experiments for 1, 3, 5 and 10 update cycles per time step. Figure \ref{fig:PS_results} shows the average return over the first 200 episodes for the different methods. The results are averaged over 100 runs. The maximum standard deviation is 0.1 for all methods, except for the method of Peng \& Williams, which had a maximum standard deviation of 1.0.

The computation time per update cycle was about the same for the three different PS implementations, with a small advantage for the small backup implementation, which shows that the $\mathcal{O}(P_{re})$ computation (see Table \ref{table:computation time}) is dominant in this task. The computation time per observation of the value iteration method was more than 400 times as high as a single update cycle.

PS with small backups turns out to be very effective. With only a single update cycle, the value-iteration result can be closely approximated, in contrast to the two classical implementations. The results also show that the Peng \& Williams method performs considerably worse than the one of Moore \& Atkeson in the considered domain. This can be explained by the different backups they perform. The effect of the backup of Peng \& Williams is proportional to the transition probability, which in most cases is $\frac{1}{15}$. In contrast, the Moore \&  Atkeson method performs a full backup each update cycle. While the small backup implementation also uses backups that are proportional to the transition probability, it performs a lot more backups per update cycle. Specifically, a number that is proportional to the number of predecessors. In general, this number will increase when the stochasticity of the domain increases.

\begin{figure}[thb]
\begin{center}
\includegraphics[height=5.5 cm]{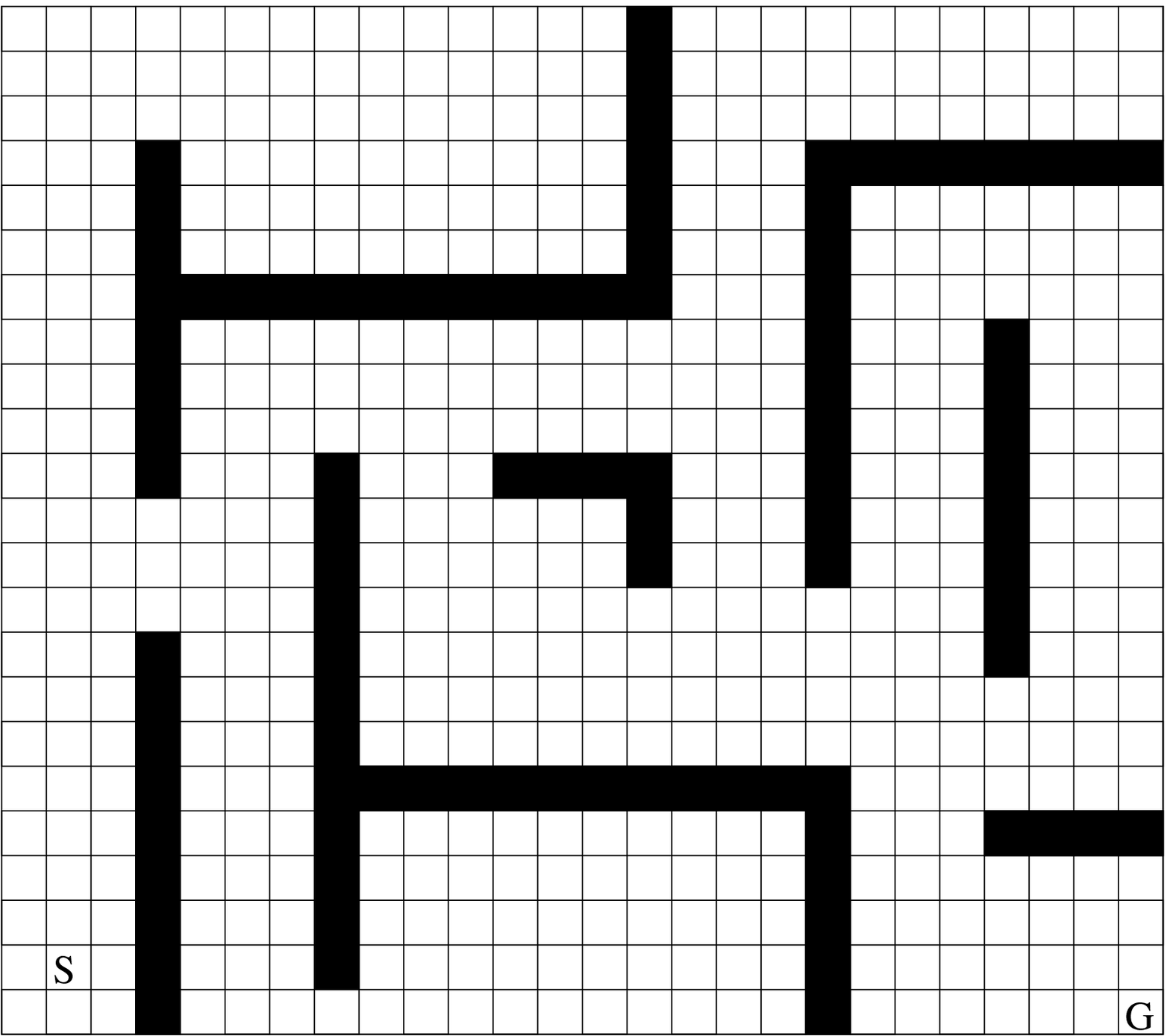}
\vspace{0cm}
\includegraphics[width=6 cm]{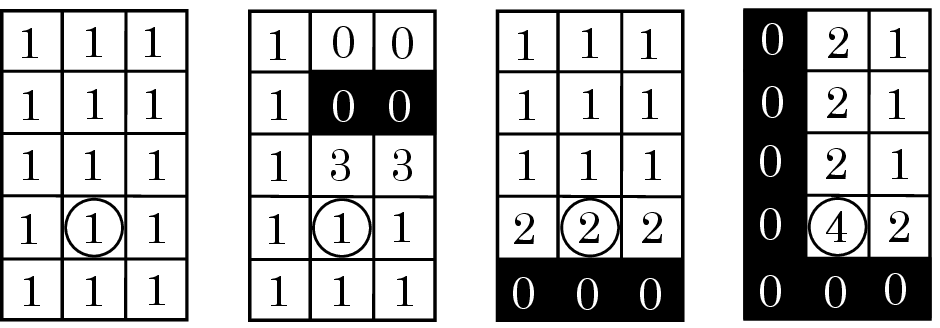}
\caption{Above, the maze task, in which the agent must travel from $S$ tothe $G$.  Below, transition probabilities ($\cdot \frac{1}{15}$) of a `north' action for different positions of the agent (indicated by the circle) with respect to the walls (black squares).}
\label{fig:LargeMaze2}
\end{center}
\end{figure}

\begin{figure}[thb]
\begin{center}
\includegraphics[width=\columnwidth]{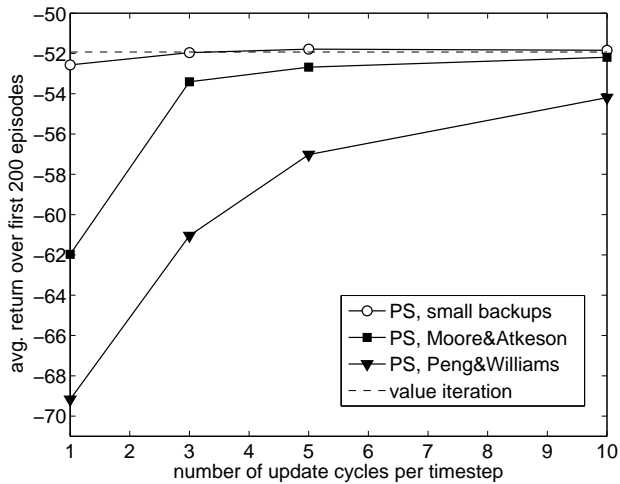}
\caption{Performance of the PS implementations on the maze task for a different number of update cycles per time step and a method that performs value iteration at each time step.}
\label{fig:PS_results}
\end{center}
\end{figure}

\section{Discussion}

Prioritized sweeping can be viewed as a generalization of the idea of replaying of experience in backward order \cite{lin:ml1992}, which by itself is related to \emph{eligibility traces} \citep{sutton:ml1988, watkins89, sutton:wals94}. What all these techniques have in common is that new information (which can be value changes, but at its core all value changes originate from new data) is propagated backwards. Whereas backward replay and eligibility traces use the recent trajectory for backward propagation of information, prioritized sweeping uses a model estimate for this. Hence, it propagates new information more broadly.

What gives the performance edge to the small backup implementation is that it implements the principle of backward updating in a cleaner and more efficient way. One update cycle of Algorithm \ref{al:PS small backups} represents, in a way, the ultimate backwards backup:  all predecessors are updated with the current value of a chosen state, which is selected because it recently experienced a large value change. In contrast, the other PS implementation place the predecessors in a queue and backup only the state with the highest priority in the next update cycle. On top of that, the computation time per update cycle is lower for the small backup implementation (see Table \ref{table:computation time}).

The new implementation of PS introduced in this paper would be impossible without the new backup. The small backup allows for very targeted updates that are computationally very cheap. This enables finer control over how computation time is spend, which is what drives the new PS implementation.

\section{Conclusion}

We demonstrated in this paper that the planning step in model-based reinforcement learning method can be done substantially more efficient by making use of small backups. These backups are finer-grained version of a full backup, which allow for more control over how the available computation time is spend. This makes new, more efficient, update strategies possible. In addition, small backups can be useful in domains with very tight time constraints, offering a parameter-free alternative to sample backups, which were up to now often the only feasible option for such domains.

\end{document}